\newtheorem{example}{Example}
\title{Situational-Constrained Sequential Resources Allocation via Reinforcement Learning}
\author{
Libo Zhang$^{1,2}$
\and
Yang Chen$^3$\and
Toru Takisaka$^{1}$\and
Kaiqi Zhao$^2$\and
Weidong Li$^2$\And
Jiamou Liu$^2$\\
\affiliations
$^1$School of Computer Science and Engineering, University of Electronic Science and Technology of China\\
$^2$The University of Auckland \\
$^3$The University of New South Wales\\
\emails
\{lzha797,wli916\}@aucklanduni.ac.nz,
takisaka@uestc.edu.cn,
\{kaiqi.zhao,jiamou.liu\}@auckland.ac.nz,
yang.chen.csphd@gmail.com
}
\newtheorem{definition}{Definition}
\newtheorem{proposition}{Proposition}
\newtheorem*{proposition*}{Proposition}
\newtheorem{problem}{Problem}
\newcommand{\Mmc}[0]{{{\mathcal{M}}}}
\newcommand{\Lmc}[0]{{{\mathcal{L}}}}
\newcommand{\real}{\mathbb{R}}
\DeclareMathOperator*{\argmin}{arg\,min}
\DeclareMathOperator*{\argmax}{arg\,max}
\newcommand{\Amc}[0]{{{\mathcal{A}}}}
\newcommand{\Smc}[0]{{{\mathcal{S}}}}
\begin{document}

\maketitle

\begin{abstract}
    Sequential Resource Allocation with situational constraints presents a significant challenge in real-world applications, where resource demands and priorities are context-dependent. This paper introduces a novel framework, SCRL, to address this problem. We formalize situational constraints as logic implications and develop a new algorithm that dynamically penalizes constraint violations. To handle situational constraints effectively, we propose a probabilistic selection mechanism to overcome limitations of traditional constraint reinforcement learning (CRL) approaches. We evaluate SCRL across two scenarios: medical resource allocation during a pandemic and pesticide distribution in agriculture. Experiments demonstrate that SCRL outperforms existing baselines in satisfying constraints while maintaining high resource efficiency, showcasing its potential for real-world, context-sensitive decision-making tasks.
\end{abstract}

\section{Introduction}\label{sec:Intro}

{\em Sequential resource allocation} (SRA) involves distributing limited resources across locations over time, where an agent allocates resources at a sequence of demand nodes while satisfying upper and lower bound constraints. The objective is to allocate resources efficiently while adhering to these constraints. SRA arises in critical domains such as healthcare, public safety, energy, and agriculture, where dynamic demands and societal priorities play a key role. For example, healthcare resource distribution during pandemics must balance immediate needs with future demand~\cite{malenica2024adaptive}, while pesticide distribution must adapt to regional crop health and sustainability requirements~\cite{qin2021density}.

Beyond efficiency, resource allocation algorithms must consider societal constraints, such as equity~\cite{pu2021fairness}, sustainability~\cite{heffron2014achieving}, and justice~\cite{zhao2020energy}. Moreover, these constraints often depend on context, such as prioritizing equity when regions’ demands conflict. For example, during the COVID-19 pandemic, New Zealand proposed a Traffic Light system \cite{taylor2023impacts} to adjust policies according to the level of emergency. E.g. one-meter distancing measures were only enforced when public medical resources faced high pressure. Similarly, the U.S. clean-energy supply-chain strategy~\cite{igogo2022america} highlighted adaptive systems to address bottlenecks during disruptions. They underscore the need for context-aware allocation strategies.

\begin{figure}[tbp]
 \centering
  \subfloat[Medical Scenario]{
  \includegraphics[width=0.42\columnwidth]{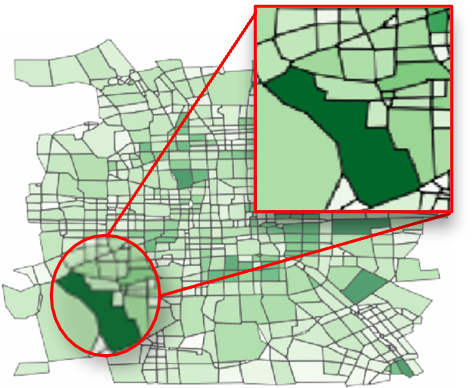}
  \label{fig:BJ-sce}
 }
 \subfloat[Agricultural Scenario]{
  \includegraphics[width=0.37\columnwidth]{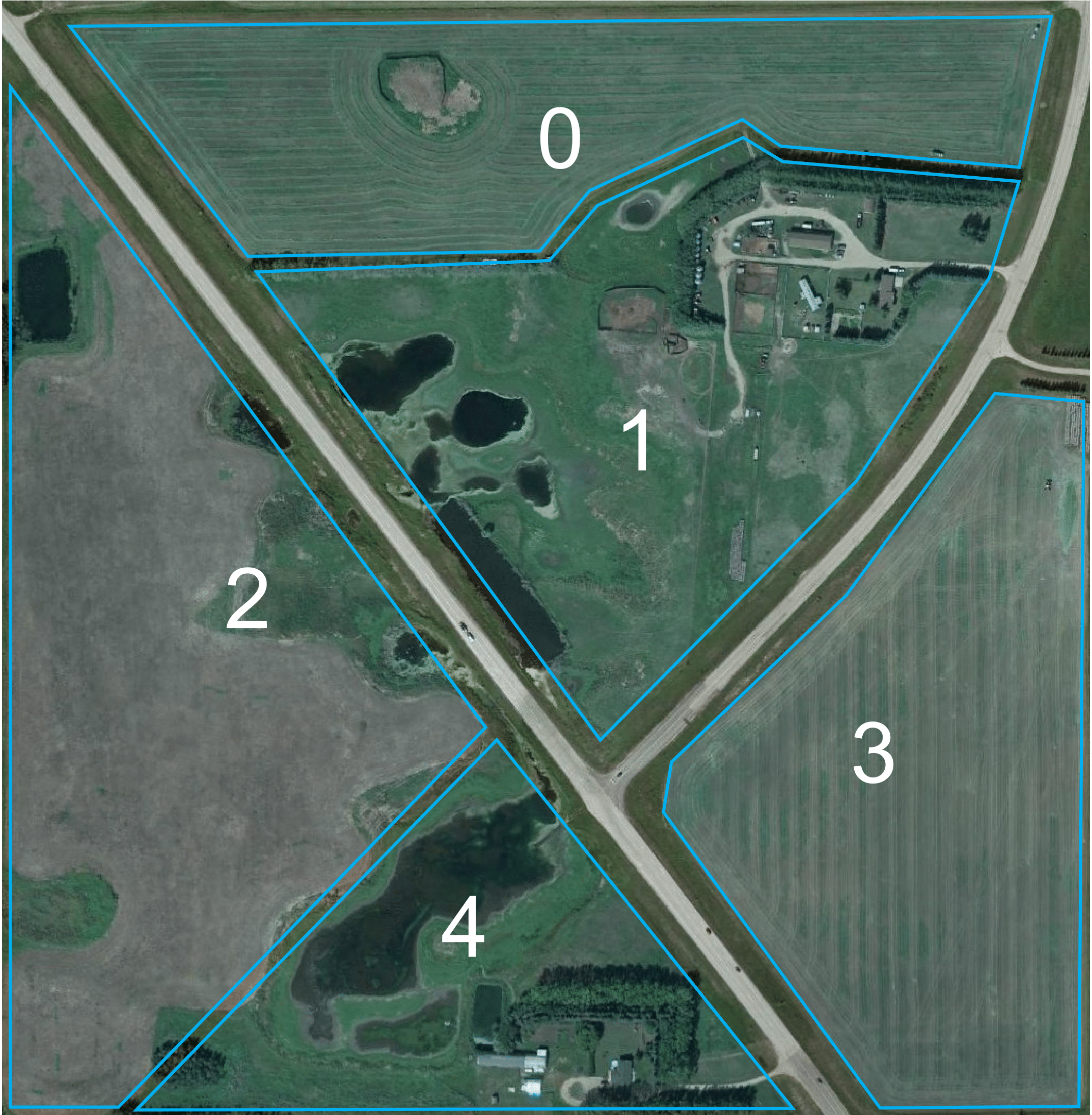}
  \label{fig:agri-sce}
 }
\caption{\textbf{Left:} Simulated medical resource demand in Beijing, where darker colors represent higher demand levels. The experiment focuses on a district in the southwest. \textbf{Right:} Farmland in Saskatchewan, Canada, used for pesticide allocation. Numbers indicate regions with varying pesticide requirements.}\label{fig:sces}
\end{figure}

Traditional solutions to SRA, such as dynamic programming~\cite{lien2014sequential} and multi-armed bandits~\cite{kaufmann2018bayesian}, are effective for small-scale problems with explicit models but struggle with scalability. Reinforcement learning (RL) offers a promising alternative by learning optimal policies through interaction with the environment, without requiring prior system knowledge~\cite{bhatia2019resource}. RL has been successfully applied to diverse SRA tasks, such as pesticide spraying~\cite{qin2021density}, healthcare resource allocation~\cite{li2023deep}, and dynamic electricity distribution~\cite{bahrami2020deep}. Constrained reinforcement learning (CRL) extends standard RL by incorporating constraints into the learning objective, typically through Lagrangian methods or constrained policy updates. Recent advancements, such as density-constrained reinforcement learning (DCRL)~\cite{qin2021density}, have extended RL by incorporating constraints on state distributions. However, existing algorithms rely on static constraints, limiting their ability to adapt to evolving demands and situational requirements. Addressing this gap calls for an advanced RL framework capable of incorporating conditional constraints to enable adaptive decision-making. This leads to the question: {\em How can we design a density-constrained RL framework that ensures situational fairness and adapts to dynamic, context-dependent constraints in resource allocation?} To address this question, one needs to (1) develop a formal framework for SRA under such constraints. (2) Propose a new density-constrained RL algorithm that handles ``if-then'' logic for such constraints. 

In this paper, we initiate the study of situational constraints for SRA tasks. We formulate the problem as a conditional DCRL problem, where the constraints are implications. To address this problem, we propose a new algorithm, Situational-Constrained Reinforcement Learning (SCRL). The algorithm extends the conventional CRL framework by introducing a violation degree-based punitive term function that quantifies the extent of constraint violations and adjusts policy updates accordingly. Unlike previous approaches~\cite{tessler2018reward,Ray2019,qin2021density}, SCRL incorporates an adaptive aggregation mechanism that handles disjunctive constraints by selectively prioritizing 
one constraint within the disjunction.
This design allows SCRL to dynamically balance reward optimization with constraint satisfaction in 
context-sensitive environments. To the best of our knowledge, this is the first work to address situational, disjunctive constraints within the CRL paradigm.

We evaluate SCRL in two real-world-inspired scenarios: medical resource allocation during the COVID-19 pandemic in Beijing, China \cite{hao2021hierarchical} and agricultural resource distribution in Saskatchewan, Canada \cite{qin2021density}. In both cases, the constraints are designed to balance equity and adequacy, such as ensuring fairness when resources are insufficient and maintaining sufficient coverage when resources are ample. Experimental results demonstrate that SCRL significantly improves the satisfaction of situational constraints compared to baseline methods and effectively adapts resource distributions to meet context-specific requirements. Additionally, we present a case study to illustrate how SCRL adjusts resource allocation across regions in response to shifting situational demands, further highlighting the algorithm's ability to provide adaptive and equitable decision-making in complex, real-world environments. 

The following is a summary of key contributions:
\begin{itemize}[leftmargin=*]
\item Formulation of sequential resource allocation with situational, disjunctive constraints.
\item Development of the SCRL algorithm with a violation degree-based punitive term for dynamic policy updates.
\item Introduction of an 
aggregation mechanism to handle disjunctive constraints in context-sensitive environments.
\end{itemize}

\section{Related Work}\label{sec:related}

\paragraph{Sequential Resource Allocation (SRA).}
SRA focuses on distributing resources in systems where demands arrive sequentially, making it distinct from traditional resource allocation due to its dynamic nature and uncertainty. Its relevance spans socially impactful applications, such as allocating medical testing resources during pandemics~\cite{malenica2024adaptive} and optimizing industrial gas deliveries to minimize costs and prevent shortages~\cite{berman2001deliveries}. Ethical considerations, such as equity in resource distribution, have also been explored in government and community planning~\cite{johnson2007community}.

Early approaches to SRA used dynamic programming to optimize costs under uncertainty, including supply chain management for sequential customers~\cite{bassok1995dynamic}. Bayesian methods were later introduced to handle stochastic dynamics, with Bayes-UCB demonstrating asymptotic optimality~\cite{kaufmann2018bayesian}. To address fairness, heuristic algorithms were proposed for equitable and sustainable allocation~\cite{lien2014sequential}.
Reinforcement learning (RL) has recently become a predominant approach for SRA, offering scalable solutions for complex environments. Deep RL has been applied to supply chain management~\cite{peng2019deep} and network slicing~\cite{liu2021clara}, enabling efficient resource allocation under constraints. Resource-constrained RL frameworks have further improved performance over conventional policies~\cite{bhatia2019resource}.

Despite these advancements, existing methods often focus on fixed constraints or single-objective optimization, leaving a gap in addressing situational and context-sensitive constraints, which are critical for real-world applications.

\smallskip

\paragraph{Constrained reinforcement learning (CRL).} 
CRL extends traditional RL by incorporating constraints to ensure policies satisfy predefined requirements while maximizing rewards~\cite{gu2022review,garcia2015comprehensive}. Rooted in Constrained Markov Decision Processes (CMDPs)\cite{altman1993asymptotic}, methods like Reward Constrained Policy Optimization (RCPO)\cite{tessler2018reward} and SAC-Lag~\cite{ha2020learning} use Lagrange multipliers to balance rewards and constraints. Constrained Policy Optimization (CPO)\cite{achiam2017constrained} introduced trust region methods for maintaining feasibility during updates, while Projection-based CPO (PCPO)\cite{yang2020projection} extended it to further avoid infeasible policy during optimization.

Density-based CRL imposes constraints directly on state density functions, offering clear physical interpretations suitable for resource and safety-critical applications~\cite{rantzer2001dual}. Qin et al.\cite{qin2021density} applied this approach to a pesticide spraying scenario by constraining pesticide density, and Zhang et al.\cite{zhang2023learning} extended it to multi-agent settings with ethical constraints. These methods demonstrated the efficacy of density constraints for resource allocation but do not address situational constraints, which require dynamic adaptation across scenarios.

Other RL paradigms are less applicable to SRA. Logic-based RL~\cite{hasanbeig2018logically,hasanbeig2020deep} relies on qualitative specifications, which are unsuitable for quantitative resource allocation and introduce significant computational complexity. 
Fuzzy-logic-based RL, e.g., FQL~\cite{glorennec1997fuzzy}, applies fuzzy rules to represent value functions and actions, and has been used in tasks like robot navigation~\cite{fathinezhad2016supervised} and resource management~\cite{prasath2024combining}. However, it typically yields soft rule satisfaction, which is unsuitable for strict constraints like fairness and safety.
Shielding~\cite{waga2022dynamicshieldingreinforcementlearning,alshiekh2017safereinforcementlearningshielding} focuses on safe exploration, which is unnecessary for SRA with reliable simulations.

Our approach builds on density-based CRL, extending it to handle situational constraints that cannot be addressed by traditional Lagrangian methods. We propose a novel algorithm tailored to this problem, ensuring dynamic and context-sensitive resource allocation.

\section{Problem Formulation}\label{sec:problem}
\subsection{Situational Constraints}
We study resource allocation to demand nodes represented as finite regions within a 2D spatial domain, reflecting applications where resources are distributed geographically. Formally, a supplier agent distributes resources over a bounded region \( D \subseteq \mathbb{R}^2 \) with \( m \in \mathbb{N} \) demand nodes, indexed as \( M = \{1, \ldots, m\} \). Each demand node \( i \in M \) covers a sub-region \( S_i \subseteq D \), and an allocation function \( f \colon M \to \mathbb{R} \) maps each node \( i \) to a non-negative amount of resources \( f(i) \geq 0 \). These sub-regions may overlap. Applications include drones spraying pesticides~\cite{qin2021density}, mobile immunization vehicles distributing vaccines, and policing resource allocation~\cite{maslen2024ethical}, as illustrated in Figure~\ref{fig:sces}.

In many situations, resource allocation must satisfy interval and equity constraints. Interval constraints ensure that each demand node \( i \) receives resources within a specified range \( f(i) \in [a_i, b_i] \), where \( 0 \leq a_i < b_i \leq \infty \)~\cite{qin2021density}. Equity constraints ensure fairness, expressed as \( |f(i) - f(j)| \leq b \), where \( b \geq 0 \)~\cite{lien2014sequential,zhang2023learning}. These constraints address sufficiency and fairness but lack flexibility for context-aware allocation. We therefore introduce situational constraints, which enable conditional relationships between constraints. For example, a situational constraint may state: ``If the resources allocated to region \( A \) exceed a certain threshold, then region \( B \) must receive a minimum amount.'' 
Let \( \vec{f} = [f(1), \ldots, f(m)] \) represent the allocation vector. 
\begin{definition} 
    An {\em atomic constraint} is of the form $\vec{a}\cdot\vec{f}\leq b$ where vector $\vec{a}\in \real^m$ and $b\in \real$ are parameters.  A {\em situational constraint} is of the form $\varphi_1(\vec{f}) \to \varphi_2(\vec{f}),$ where $\varphi_1(\vec{f})$ and $\varphi_2(\vec{f})$ are atomic. 
\end{definition}
Situational constraints generalize existing formulations. For example, an interval constraint \( f(i) \in [a_i, b_i] \) can be expressed as \( \top \to [-f(i) \leq -a_i] \) and \( \top \to [f(i) \leq b_i] \), where \( \top \) denotes an always-true atomic constraint. Similarly, an equity constraint can be rewritten as a conjunction of two interval constraints.
%
While interval and equity constraints have been studied~\cite{qin2021density,lien2014sequential,zhang2023learning}, they fail to capture context-sensitive requirements. Situational constraints address this gap by capturing conditional demands.
\subsection{SRA with Situational Constraints}\label{sec:mdp-form}
Formally, the SRA problem is represented by the MDP \( \mathcal{M} = \langle \mathcal{S}, \mathcal{A}, r, P, \gamma, \eta, L \rangle \), where:
\begin{itemize}[leftmargin=*]
\item \( \mathcal{S} \): The state space, which captures the agent's position in the 2D region and its movement dynamics, such as velocity. 

\item  \( \mathcal{A} \): The action space, which represents the set of actions available to the agent. 

\item \( P\colon \mathcal{S} \times \mathcal{A} \to \Delta(\mathcal{S}) \): The transition function, which defines the probability distribution over next states \( s' \) given the current state \( s \) and action \( a \). 

\item  \( r\colon \mathcal{S} \times \mathcal{A} \to \real \): The reward function, which quantifies the efficiency of resource allocation. 

\item \( \gamma \in (0, 1) \): The discount factor.

\item \( \eta \in \Delta(\mathcal{S})\): The initial state distribution, which specifies the probability distribution over initial states \( s_0 \in \mathcal{S} \).

\item \( L\colon \mathcal{S} \to 2^M \):  The labeling function maps  \( s \in \mathcal{S} \) to the set of demand nodes $L(s)$ receiving resources at that state.
\end{itemize}

To formalize the reward function $r$, we follow \cite{qin2021density} and assume that the agent distributes resources at a constant rate as it moves through the region. This means that the amount of resources $f(i)$ allocated to a demand node \( i \in M \) is captured by the time the agent spends within the corresponding sub-region \( S_i \). This simplification links resource allocation directly to the agent's trajectory, allowing the agent to control resource distribution through its movement across the space. Formally, a {\em trajectory} is a potentially infinite sequence $\tau = (s_0,s_1,s_2,\ldots)$ where each $s_t\in \mathcal{S}$ represents the agent's state at time $t$. The following definition follows the definition in \cite{qin2021density,rantzer2001dual,syed2008apprenticeship,chen2019duality}.

\begin{definition}
Given a trajectory \( \tau \), the \emph{density} of resources allocated to demand node \( i \in M \) is defined as:
\(
\rho^\tau(i) \coloneqq \sum_{t=0}^\infty \gamma^t \cdot \mathbb{1}_{L(s_t)}(i),
\)
where \( \mathbb{1}\) is the indicator function.  
\end{definition}

The density \( \rho^\tau(i) \) quantifies the cumulative, discounted amount of resources allocated to demand node \( i \) by the agent while following the trajectory \( \tau \). The discount factor ensures the density function does not diverge under infinite-horizon settings.
%
 %
A \emph{policy} \( \pi \colon \mathcal{S} \to \Delta(\mathcal{A}) \) defines a probability distribution over the agent's actions at each state. The sequence of states \( \tau = (s_0, s_1, \ldots) \) {\em conforms to} a policy \( \pi \), written \( \tau \sim \pi \), if:
\(
s_{t+1} \sim P(s_t, a_t), \quad a_t \sim \pi(s_t) \text{ \quad  for all } t \geq 0,
\)
where \( P(s_t, a_t) \) is the transition function.

\begin{definition}
Given a policy \( \pi \colon \mathcal{S} \to \Delta(\mathcal{A}) \), the \emph{expected density} of resources allocated to demand node \( i \in M \) is:
\[
\rho^\pi(i) \coloneqq \mathbb{E}_{\tau \sim \pi}[\rho^\tau(i)] = \sum_{t=0}^\infty \gamma^t \Pr(i \in L(s_t) \mid  \pi, \eta),
\]
where \( \Pr(i \in L(s_t) \mid \pi, \eta) \) denotes the probability that demand node \( i \) is receiving resources at time \( t \), given the policy \( \pi \) and initial state distribution \( \eta \).  
\end{definition}

The expected density \( \rho^\pi(i) \) captures the average amount of resources allocated to demand node \( i \) when the agent follows policy \( \pi \). 
Thus any atomic constraint $\varphi(\vec{f})$ of the form $\vec{a}\cdot \vec{f}\leq b$ can be rephrased as the following constraint over the policy $\pi$ of the agent: 
$  a_1 \rho^\pi(1)+\cdots + a_m\rho^\pi(m)\leq b$.
We now formalize our main problem:

\begin{problem}[SRA with situational constraints]~\label{prob:prob1}
\[
\argmax_{\pi}  \sum_{s\in \mathcal{S}} \eta(s) V_\pi(s) \text{ \qquad s.t. \qquad } \Psi(\pi)
\]
where  \( V_\pi(s) \) is the expected cumulative reward under policy \( \pi \), defined as:
\(
   V_\pi(s) = \mathbb{E}_{\tau \sim \pi,s_0=s} \left[ \sum_{t=0}^\infty \gamma^t r(s_t, a_t) \right],
\)
    and \( \Psi(\pi) \) is a conjunction of situational constraints over  $\pi$.  
\end{problem}

\section{Method}\label{sec:alg}
\subsection{Algorithm Overview}\label{sec:AlgOvv}
To solve Problem~\ref{prob:prob1}, we propose a general algorithm framework (Alg.~\ref{alg:Scheme}) designed for constrained reinforcement learning (CRL) tasks. The framework dynamically addresses the dual objectives of maximizing cumulative rewards and satisfying constraints by incorporating a {\em punitive mechanism} through a {\em punitive term function}, denoted as \( \sigma(\Psi, s) \). This term penalizes constraint violations directly within the reward structure, effectively transforming the constrained RL problem into an unconstrained optimization task. The objective then becomes maximizing the cumulative punished reward, allowing the agent to learn constraint satisfaction implicitly while pursuing reward optimization.

The framework operates iteratively through three core steps:
(1) {\bf Trajectory generation:} The current policy \( \pi \) is used to generate a collection of trajectories \( D_\pi \).
(2) {\bf Constraint evaluation and punitive term update:} Using \( D_\pi \), the violation degree \( Vio^\pi(\Psi) \) is computed to evaluate how well the policy satisfies the constraints, and \( \sigma(\Psi, s) \) is updated accordingly.
(3) {\bf Policy optimization:} The policy \( \pi \) is updated by maximizing the cumulative punished reward.
This iterative process continues until convergence, ensuring that both reward maximization and constraint satisfaction objectives are met. This framework generalizes existing CRL methods, including RCPO~\cite{tessler2018reward}, PPO-Lag~\cite{Ray2019}, and DCRL~\cite{qin2021density}, which utilize scalar Lagrangian multipliers as punitive terms. 

\begin{algorithm}[b]
    \caption{Algorithm Scheme with Punitive Term}\label{alg:Scheme}
    \begin{algorithmic}[1]
    \STATE {\bf Input:} An MDP $\Mmc$ with a constraint $\Psi$
    \STATE {\bf Initialize:} An initial policy $\pi$; A punitive term $\sigma$
    \WHILE{Not converged}
        \STATE \textbf{Generate trajectories} $D_\pi\leftarrow \{\tau_1,\tau_2,\cdots\mid \eta,\pi,P\}$
        \STATE \textbf{Evaluate violation degree} $Vio^\pi(\Psi)$ using $D_\pi$\label{line:eval-cons-vio}
        \STATE \textbf{Update punitive term function} $\sigma(\Psi,s)$ 
        \FOR{each transition $(s,a,r,s') \in \tau_i$ where $\tau_i \in D_\pi$}
            \STATE Apply punitive term on reward $r' \leftarrow r -\sigma(\Psi,s)$
        \ENDFOR
        \STATE \textbf{Update policy} $\pi \leftarrow \argmax_{\pi}\mathbb{E}_{D_\pi}\left[\sum_t \gamma^t r'(s_t,a_t)\right]$ \label{line:update-pi}
    \ENDWHILE
    \STATE Return $\pi$
\end{algorithmic}
\end{algorithm}

We extend the punitive mechanism to handle situational constraints. This involves:
1. Defining the punitive term \( \sigma(\varphi, s) \) for atomic constraints, ensuring it accurately reflects the violation degree.
2. Extending \( \sigma \) to situational constraints, such as \( \sigma(\varphi_1 \to \varphi_2, s) \).

\subsection{Punitive Mechanism: Atomic Constraints}\label{sec:punSing}

For an atomic constraint \( \varphi \) of the form \( \vec{a} \cdot \vec{\rho^\pi} \leq b \), the violation degree is defined as: $Vio^\pi(\varphi) \coloneqq \vec{a} \cdot \vec{\rho^\pi} - b$.
This quantifies the extent to which the constraint \( \varphi \) is violated under the current policy \( \pi \). A positive \( Vio^\pi(\varphi) \) indicates a violation, while a value of zero or less signifies satisfaction of the constraint.

To design a punitive mechanism for atomic constraints, we decompose it into two complementary components: the {\em penalty factor}, which measures the overall severity of a constraint violation, and the {\em weighting factor}, which determines the state-level impact of the violation. These components collectively define the punitive term \( \sigma(\varphi, s) \).

\smallskip

\paragraph{1. Penalty factor:} The penalty factor \( \kappa(\varphi) \) is updated iteratively to reflect the accumulated violation of $\varphi$ over time:
\[
\kappa'(\varphi) \coloneqq \max(0, \kappa(\varphi) + \beta \cdot Vio^\pi(\varphi)),
\]
where  \( \beta \) is the learning rate. This approach aligns with existing CRL methods, such as DCRL~\cite{qin2021density}, where the penalty factor is dynamically adjusted to enforce density constraints. For instance, DCRL enforces an upper bound \( \rho_{max} \) on state density \( \rho^\pi(s') \) by updating \( \kappa \) as $\kappa' = \max(0, \kappa + \beta \cdot (\rho^\pi(s') - \rho_{max}))$.
Our mechanism generalizes this idea to arbitrary atomic constraints.

\smallskip

\paragraph{2. Weighting Factor:} The weighting factor \( w(\varphi, s) \) accounts for the fact that visiting different states may contribute unequally to the violation of \( \varphi \) by setting
\[
w(\varphi, s) \coloneqq
\frac{\sum_{i=1}^m a_i \cdot \mathbb{1}_{L(s)}(i)}{\sum_{i=1}^m |a_i|},
\]
where \( \mathbb{1} \) is the indicator function, and \( L(s) \) identifies the demand nodes affected by state \( s \). Thus the weighting factor \( w(\varphi, s) \) reflects how allocating resources to state \( s \) impacts the violation degree \( Vio^\pi(\varphi) \).

\smallskip

\paragraph{3. Punitive term function:} The punitive term for an atomic constraint \( \varphi \) is defined as:
$\sigma(\varphi, s) = w(\varphi, s) \cdot \kappa(\varphi)$.
The penalized reward is then computed as:
\[
r'(s, a) = r(s, a) - \sigma(\varphi, s).
\]
This formulation ensures that the agent is guided toward satisfying the constraint by dynamically adjusting the reward based on the impact of each state on the violation degree.

\begin{example}\label{exa:weight}
Consider atomic constraint \( \varphi \) that specifies
$\rho^\pi(i)-\rho^\pi(j)\leq 0$, which enforces that the resources allocated to demand node $i$ should not exceed those allocated to demand node $j$. Suppose that the penalty factor $\kappa(\varphi)>0$, i.e., the constraint is violated under the current $\pi$. 
For $s\in \mathcal{S}$, when $i\in L(s)$ and $j\notin L(s)$, by definition, we have $w(\varphi,s)=1/2$. In this case, $\sigma(\varphi,s)=1/2\kappa(\varphi)>0$. This leads to a decrease on the reward, which discourages the agent to allocate resource at $s$. On the other hand, when $i\notin L(s)$ and $j\in L(s)$, $w(\varphi,s)=-1/2$, which means $\sigma(\varphi,s)<0$, leading to an increase on the reward, which encourages the agent to allocate resources at $s$. In any other case, $w(\varphi,s)=0$, which means that any penalty at state $s$ will not affect the satisfaction of $\varphi$. 
\end{example}

With the definition of the punitive term $\sigma(\varphi,s)$ provided above, we can directly instantiate Algorithm~\ref{alg:Scheme} for addressing an SRA problem with an atomic constraint $\varphi$.

\begin{proposition}\label{rem:DCRL-behav}
Consider an SRA problem with an atomic constraint $\varphi$ of the form $\vec{a}\cdot \vec{\rho}^\pi\leq b$. With a sufficiently small learning rate $\beta$, the algorithm framework incorporating the punitive term $\sigma(\varphi,s)$ converges to a feasible solution. 
\end{proposition}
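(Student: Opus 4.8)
The plan is to show that, for a single atomic constraint, Algorithm~\ref{alg:Scheme} equipped with the punitive term $\sigma(\varphi,s)=w(\varphi,s)\cdot\kappa(\varphi)$ is exactly a Lagrangian dual-ascent scheme for the linear program obtained by lifting Problem~\ref{prob:prob1} to the space of discounted occupancy measures, and then to invoke the classical convergence theory of the projected subgradient method.

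First I would lift the problem to occupancy-measure space. Let $d^\pi$ denote the discounted occupancy measure induced by $\pi$ and $\eta$; the set $\mathcal D=\{d^\pi:\pi\}$ is a convex polytope and every point of it is realized by a stationary policy. Both $\sum_{s}\eta(s)V_\pi(s)$ and each $\rho^\pi(i)=\sum_t\gamma^t\Pr(i\in L(s_t)\mid\pi,\eta)$ are linear functionals of $d^\pi$, so the single-constraint instance of Problem~\ref{prob:prob1} is the linear program $\max_{d\in\mathcal D}\langle r,d\rangle$ subject to $\langle c_\varphi,d\rangle\le b$ for a suitable coefficient vector $c_\varphi$; assuming a strictly feasible policy exists (Slater's condition), strong duality holds for this LP.

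The central step is a short computation showing the punitive term implements the Lagrangian of this LP. Writing $\|\vec a\|_1=\sum_i|a_i|$ and expanding $w(\varphi,s)=\big(\sum_i a_i\,\mathbb{1}_{L(s)}(i)\big)/\|\vec a\|_1$, linearity of expectation yields $\mathbb{E}_{\tau\sim\pi}\big[\sum_t\gamma^t w(\varphi,s_t)\big]=(\vec a\cdot\vec\rho^\pi)/\|\vec a\|_1$, hence the penalized value obeys $\mathbb{E}_{s_0\sim\eta}\big[\sum_t\gamma^t r'(s_t,a_t)\big]=\sum_s\eta(s)V_\pi(s)-\frac{\kappa(\varphi)}{\|\vec a\|_1}\big(Vio^\pi(\varphi)+b\big)$. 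The additive $b$-term is constant in $\pi$, so the policy update in line~\ref{line:update-pi} returns an exact maximizer $\pi_\lambda$ of the Lagrangian $\mathcal L(\pi,\lambda):=\sum_s\eta(s)V_\pi(s)-\lambda\,Vio^\pi(\varphi)$ with $\lambda:=\kappa(\varphi)/\|\vec a\|_1\ge0$. Dividing the penalty update $\kappa'=\max(0,\kappa+\beta\,Vio^\pi(\varphi))$ by $\|\vec a\|_1$ rewrites the $\lambda$-iteration as $\lambda'=\big[\lambda+(\beta/\|\vec a\|_1)\,Vio^{\pi_\lambda}(\varphi)\big]_+$; since $-Vio^{\pi_\lambda}(\varphi)$ is a subgradient at $\lambda$ of the convex, piecewise-linear dual $g(\lambda)=\max_\pi\mathcal L(\pi,\lambda)$, this is precisely projected subgradient descent on $\min_{\lambda\ge0}g(\lambda)$ with step $\beta/\|\vec a\|_1$, and the subgradients are uniformly bounded because $0\le\rho^\pi(i)\le 1/(1-\gamma)$.

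Finally I would invoke the standard guarantee for projected subgradient descent with bounded subgradients: for a sufficiently small step size (constant with time-averaging, or diminishing), the dual iterates approach the set of dual optimizers and the averaged occupancy measure $\bar d_T=\tfrac1T\sum_{k\le T}d^{\pi_k}$ has constraint violation and objective gap tending to zero; since $\mathcal D$ is convex and closed, $\bar d_T$ is itself the occupancy measure of a stationary policy, so in the limit the algorithm outputs a feasible (indeed, by strong duality and complementary slackness, optimal) solution. The main obstacle is exactly this last bridge from dual convergence to primal feasibility: the per-iteration maximizers $\pi_\lambda$ may jump between vertices of $\mathcal D$ and need not converge pointwise, so feasibility cannot simply be read off the last iterate --- establishing it rigorously requires the time-averaging (ergodic-convergence) argument together with the realizability of averaged occupancy measures by stationary policies.
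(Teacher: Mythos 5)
Your proposal is correct, and it shares with the paper's proof the one indispensable computation: expanding $w(\varphi,s)$ and summing over the discounted trajectory to show that the punitive term $\sigma(\varphi,s)=\kappa(\varphi)\,w(\varphi,s)$ is exactly a Lagrangian penalty $\lambda\,c_\varphi(s)$ with multiplier $\lambda=\kappa(\varphi)/\|\vec a\|_1$, and that the $\kappa$-update is the standard multiplier ascent with rescaled step $\beta/\|\vec a\|_1$. Where you diverge is in how convergence is then concluded. The paper stops at this identification and appeals to the known almost-sure convergence of Lagrangian CRL (Tessler et al.'s two-timescale stochastic-approximation result), treating the reduction as the whole proof. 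You instead give a self-contained convex-duality argument: lift to the occupancy-measure polytope, observe the constrained problem is an LP, interpret the $\kappa$-update as projected subgradient descent on the dual $g(\lambda)=\max_\pi\mathcal L(\pi,\lambda)$ with uniformly bounded subgradients, and recover a feasible primal point by ergodic averaging of occupancy measures. Your route buys two things the paper's does not: it matches the algorithm as literally written (an exact $\argmax$ per iteration rather than a slow-timescale gradient step), and it honestly confronts the gap the paper glosses over, namely that the per-iteration maximizers can oscillate between vertices of the polytope so feasibility must be extracted from the averaged iterate rather than the last one. The cost is an extra hypothesis (Slater's condition / existence of a strictly feasible policy) that the paper's citation-based argument hides inside its "mild assumptions," and, like the paper, you still idealize away the fact that $\rho^\pi$ is only estimated empirically from sampled trajectories. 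Both issues are acceptable at the level of rigor the proposition is stated, so your proof stands as a valid, and in places tighter, alternative.
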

\begin{proof}
The proof builds on principles from canonical Lagrangian-based CRL frameworks, such as \cite{tessler2018reward}. Appendix.~\ref{Sec:App-Alg} contains proof details.
\end{proof}

\subsection{Punitive Mechanism: Situational Constraints}\label{sec:punDis}
Situational constraints \( \psi \coloneqq \varphi_1 \to \varphi_2 \) can be reformulated as disjunctive constraints \( \neg \varphi_1 \lor \varphi_2 \). This requires handling disjunctions of atomic constraints \( \varphi_1 \vee \varphi_2 \), which pose unique challenges. Traditional CRL methods work well for conjunctive constraints as they enable gradient-based optimization within connected feasible regions. However, disjunctive constraints create disconnected feasible regions that make gradient-based methods ineffective.

Conventional approaches usually reformulate disjunctive constraints into mixed-integer linear programming (MILP)~\cite{trespalacios2015improved,kronqvist2021between} and solve them using techniques such as branch-and-bound~\cite{turkay1996disjunctive}. While effective for problems with explicit system models (e.g., linear programming), these methods struggle with the complexity and dynamic nature of realistic tasks like the SRA problem, where constraints are context-dependent and the environment evolves sequentially. Moreover, branch-and-bound approaches scale poorly in large or high-dimensional problems due to their exhaustive exploration of disjuncts. Importantly, recent machine learning methods have tackled disjunctive constraints by interpreting them as a \(\min\)-operator over loss functions or as unions of feasible sets~\cite{ren2020query2box,huang2022line,li2019augmenting,nandwani2019primal}. Building on these ideas, our approach extends the use of \(\min\)-operators to RL by designing a punitive mechanism for disjunctive constraints.

To define the punitive term $\sigma(\psi,s)$ on a state $s$, we adopt the principle that prioritizes the ``least-violated'' disjunct in the constraint, i.e., $\varphi_j$ where $j=\argmin_j \{\sigma(\varphi_1, s), \sigma(\varphi_2, s), \ldots, \sigma(\varphi_j, s)\}$. The $\min$ operator aligns with the logical semantics of disjunctions and has been applied in prior works~\cite{ren2020query2box,huang2022line}. This design encourages the policy to satisfy the most attainable disjunct in a disjunction. An illustrative example is provided in Figure~\ref{fig:algtree} in the Appendix.

While it is intuitive to select the least-violated atomic constraint during the optimization, greedily applying the $\min$-operator may suffer from sub-optimality in cases where it focuses on an infeasible disjunct that appears easier to satisfy. Specifically, consider a disjunction \( \psi \coloneqq \varphi_1 \lor \varphi_2 \), where \( \varphi_1 \) is infeasible for the policy set \( \Pi \), meaning \( \forall \pi \in \Pi, \, Vio^\pi(\varphi_1) > 0 \), while \( \varphi_2 \) is feasible. In this scenario, for a given policy \( \pi' \in \Pi \) at state \( s \), if the punitive term for the infeasible disjunct \( \varphi_1 \), \( \sigma(\varphi_1, s) \), is smaller than that for the feasible disjunct \( \varphi_2 \), \( \sigma(\varphi_2, s) \), the algorithm would encourage the policy to prioritize \( \varphi_1 \). 
Consequently, the policy is misguided to focus on an unattainable constraint. 
Figure~\ref{fig:weiDft} in Appendix.~\ref{Sec:App-Alg}. illustrates this issue.

To address this issue, we propose a probabilistic mechanism for selecting disjuncts within a disjunctive constraint. For a disjunction \( \psi = \bigvee_{j \in [J]} \varphi_j \), where \( \varphi_j \) are atomic constraints, probabilistic mechanism defines a random variable \( \Phi \) over the set of atomic constraints. The punitive term \( \sigma(\psi, s) \) is then defined as \( \sigma(\Phi, s) \), where \( \Phi \) follows a probability distribution that assigns a probability \( p_j \) to each \( \varphi_j \). These probabilities are defined as \( p_j \coloneqq \frac{\kappa(\varphi_j)^{-1}}{\sum_{j \in J} \kappa(\varphi_j)^{-1}} \), where \( \kappa(\varphi_j) \) is the penalty factor associated with \( \varphi_j \). This formulation ensures that constraints closer to satisfaction are prioritized, while constraints with larger penalty factors are still occasionally explored due to their nonzero probabilities. 
Further details on the implementation are provided in Algorithm~\ref{alg:sig_hat} in Appendix~\ref{Sec:App-Alg}, and its performance is evaluated through an ablation study.

Finally, for a conjunction of $I$ situational constraints, $\Psi \coloneqq \land_{i \in I} \varphi_i$, we define $\sigma(\Psi,s)=\sum_{i \in I}\sigma(\psi_i,s)$.

\subsection{SCRL Algorithm}\label{sec:AlgProc}
Alg.\ref{alg:SCRL} presents the Situational-Constrained Reinforcement Learning (SCRL) algorithm, an instance of the general framework (Alg.\ref{alg:Scheme}) tailored for Problem~\ref{prob:prob1}. SCRL incorporates the punitive mechanism defined above to handle situational constraints.
The algorithm initializes the penalty factor \( \kappa(\varphi) = 0 \) for each atomic constraint \( \varphi \), with a learning rate \( \beta \) to encourage exploration during the early stages of training. Trajectory data is used to empirically estimate the state density \( \rho^\pi(s) \), leveraging either discrete state counts or kernel-based methods for continuous spaces~\cite{qin2021density,chen2017tutorial}, ensuring computational efficiency for large-scale problems.
The algorithm iteratively alternates between generating trajectories, updating penalty factors based on constraint violations, applying punitive terms to the rewards, and optimizing the policy. This iterative process ensures both constraint satisfaction and reward maximization.

\begin{algorithm}[tb]
   \caption{Situational-Constrained RL}\label{alg:SCRL}
\begin{algorithmic}[1]
   \STATE {\bf Input:} An MDP 
$(\Smc, \Amc, P, r, \eta, \gamma, L)$, situational constraints $\Psi := \bigwedge_{i \in I} \bigvee_{j \in \{1,2\}}\varphi_j$
   \STATE {\bf Initialisation:} Let $\pi$ be a random policy, $\kappa(\varphi) = 0$ be penalty factor for each $\varphi$, $\beta$ be learning rate for $\kappa$
   \REPEAT \label{line:mainLoopStarts}
   \STATE Generate trajectories $D_\pi=\{\tau_1,\tau_2,\cdots\mid \eta,\pi,P\}$ \label{line:generateExperience}
   \STATE Empirically compute density $\rho^\pi$ according to $D_\pi$ \label{line:computeDensityEstimation}
   \FORALL{atomic constraint $\varphi$}
        \STATE Compute the violation degree $Vio^\pi(\varphi)$ for $\varphi$ \label{line:computeViolation}
        \STATE Update penalty factor as: \\ $\kappa(\varphi) \leftarrow \max(0,\kappa(\varphi)+\beta  Vio^\pi(\varphi))$ \label{line:updateSigma}
    \ENDFOR 
   \FOR{each $\tau_i \in D_\pi$, each transition $(s,a,r,s') \in \tau_i$} \label{line:modifyRewardStarts}
        \STATE Calculate $\sigma(\Psi,s)$. \label{line:computeHat}
        \STATE Apply punitive term on reward $r' \leftarrow r -\sigma(\Psi,s)$\label{line:modifyReward}
   \ENDFOR\label{line:modifyRewardEnds}
   \STATE Solve $\pi$ that maximizes the expected punished return based on $D_\pi$\label{line:updatePol}
   \UNTIL Convergence\label{line:mainLoopEnds}
   \STATE {\bfseries Output:} A policy $\pi$, with density values $\rho^\pi$.
\end{algorithmic}
\end{algorithm}

\section{Experiment}\label{sec:Exp}
We aim to validate SCRL algorithm's performance through empirical evaluations over two real-world scenarios.

\subsection{Experiment Scenarios and Tasks}

\paragraph{Medical Resource Allocation.} This scenario models the allocation of medical resources in Beijing during the COVID-19 pandemic using a simulation from~\cite{hao2021hierarchical}. The city is divided into modules, grouped into five sub-regions based on demand levels (Figure~\ref{fig:med-regions}). The challenge is to prioritize high-demand regions during resource shortages while maintaining fairness, reflecting real-world public health requirements for dynamic, context-sensitive allocation policies. 

\paragraph{Agricultural Spraying Drone.}~\cite{qin2021density} This scenario involves pesticide allocation in farmland in Saskatchewan, Canada, divided into five sub-regions based on crop types (Figure~\ref{fig:agri-regions}). The agent must optimize pesticide usage by responding to pest outbreaks while avoiding overuse, balancing sufficiency and fairness across regions. This mirrors real-world agricultural challenges with economic and environmental implications.

The detailed experiment setting of these two scenarios can be found in the Appendix.~\ref{sec:apx-exp}.
For both scenarios, the agent’s goal is to maximize resource allocation efficiency while satisfying constraints. Three tasks of increasing complexity evaluate the agent's performance: 1. {\bf Situational Task:} A single situational constraint requires, e.g., ``If resources allocated to certain regions exceed a threshold, others must receive a minimum allocation.'' This tests the agent's ability to adapt dynamically to conditional requirements. 2. {\bf Priority Task:} Involves equity constraints (equal resource allocation across specific regions) and adequacy constraints (minimum resources for specific regions). The situational requirement states, ``If adequacy cannot be met, ensure equity.'' This evaluates the agent's ability to prioritize fairness under resource limitations. 3. {\bf Joint Task:} Combines adequacy and equity constraints simultaneously without prioritization, requiring the agent to balance potentially conflicting requirements. In some cases, satisfying both constraints may be infeasible. 
Details on scenarios and tasks are provided in Appendix~\ref{sec:apx-exp}.

\begin{figure}[!tbp]
 \centering
  \subfloat[Med-Regions]{
  \includegraphics[width=0.37\columnwidth]{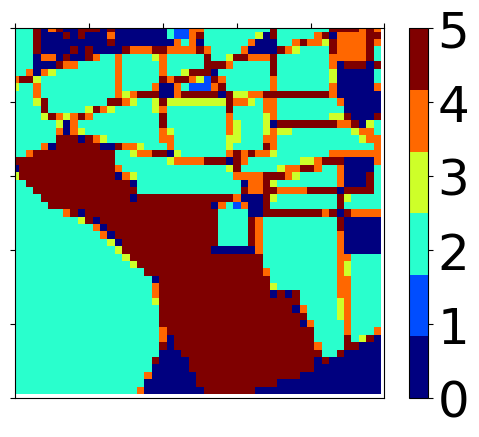}
  \label{fig:med-regions}
 }
 \subfloat[Agri-Regions]{
  \includegraphics[width=0.37\columnwidth]{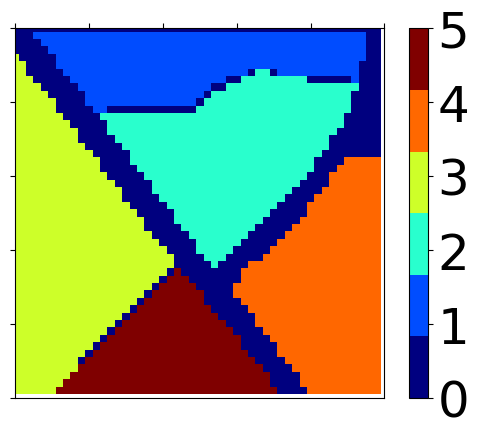}
  \label{fig:agri-regions}
 }
\caption{
The sub-regions in two scenarios. For both scenarios, the map is divided into 50$\times$50 grids, and 5 regions (regions with label 0 are ignored in our settings).
}\label{fig:regions}
\end{figure}

\subsection{Baselines}
We compare our approach against four baseline methods:
\begin{enumerate}[leftmargin=*]
    \item Deep Deterministic Policy Gradient {\bf (DDPG)}~\cite{gu2017deep} serves as an unconstrained RL baseline, optimizing solely for reward without considering constraints. 

    \item Reward Constrained Policy Optimization {\bf (RCPO)}~\cite{tessler2018reward} is adapted to our setting by defining cost functions over state-action pairs instead of using density-based constraints. RCPO cannot natively support density-based situational constraints, requiring approximations for implementation.

    \item Conservative Augmented Lagrangian {\bf (CAL)}~\cite{wu2024off} is a recent primal-dual CRL method. We adapt CAL to our setting in the same manner as RCPO. 
    
    \item Density Constrained Reinforcement Learning {\bf (DCRL)}~\cite{qin2021density} is included as a baseline but lacks native support for situational constraints. To adapt, we decompose each situational constraint \( \varphi_1 \to \varphi_2 \) into: {\bf (DCRL1):} The premise \( \neg \varphi_1 \), treated as an interval constraint \( \rho^\pi(s) \in [a, b] \). {\bf (DCRL2):} The conclusion \( \varphi_2 \), addressed independently as an interval constraint.
\end{enumerate}

\subsection{Performance Metrics}

The primary evaluation metric, {\bf constraint violation (Cons.Vio.)}, measures violation of constraints. A lower Cons.Vio. indicates better compliance with these critical constraints and is preferred, as we prioritize the safety and fairness in real-world applications. The density function in experiments takes an undiscounted sum due to the finite-horizon setting. The secondary metric, {\bf reward}, assesses resource efficiency, with higher rewards indicating less resource allocation amount. A negative reward is assigned when one unit of resources is allocated.

\begin{table}[tb]
		\scriptsize
			\centering 
\begin{tabular}{lllrr}
\toprule
\textbf{Scenario}                & \textbf{Task}                         & \textbf{Alg.} & \multicolumn{1}{c}{\textbf{Cons.Vio}} & \multicolumn{1}{c}{\textbf{Reward}} \\ \hline
\multirow{17}{*}{\textbf{Med.}}  & \multirow{6}{*}{\textbf{Situational}} & DDPG          & 4.84$\pm$0.28                         & \textbf{-2.09$\pm$0.01}             \\
                                 &                                       & RCPO          & 5.07$\pm$0.33                         & \textbf{-2.09$\pm$0.01}             \\
                                 &                                       & CAL           & 16.84$\pm$13.13                       & -4.05$\pm$2.16                      \\
                                 &                                       & DCRL1         & \textbf{0.0$\pm$0.0}                  & -14.65$\pm$1.02                     \\
                                 &                                       & DCRL2         & 10.15$\pm$9.17                        & -11.65$\pm$13.94                    \\
                                 &                                       & SCRL          & \textbf{0.0$\pm$0.0}                  & -13.12$\pm$3.86                     \\ \cline{2-5} 
                                 & \multirow{6}{*}{\textbf{Priority}}    & DDPG          & 30.16$\pm$10.61                       & -2.74$\pm$2.03                      \\
                                 &                                       & RCPO          & 33.36$\pm$0.75                        & \textbf{-2.1$\pm$0.01}              \\
                                 &                                       & CAL           & \multicolumn{1}{l}{13.31$\pm$7.37}    & -4.67$\pm$1.90                      \\
                                 &                                       & DCRL1         & 32.15$\pm$4.44                        & -3.38$\pm$4.06                      \\
                                 &                                       & DCRL2         & 7.95$\pm$14.14                        & -29.77$\pm$19.75                    \\
                                 &                                       & SCRL          & \textbf{0.0$\pm$0.0}                  & -18.52$\pm$13.17                    \\ \cline{2-5} 
                                 & \multirow{5}{*}{\textbf{Joint}}       & DDPG          & 54.04$\pm$0.23                        & \textbf{-2.09$\pm$0.01}             \\
                                 &                                       & RCPO          & 54.04$\pm$0.22                        & -2.1$\pm$0.01                       \\
                                 &                                       & CAL           & \textbf{36.18$\pm$0.4}                & -5.02$\pm$3.02                      \\
                                 &                                       & DCRL          & 41.03$\pm$7.9                         & -29.97$\pm$18.16                    \\
                                 &                                       & SCRL          & 37.17$\pm$8.85                        & -9.62$\pm$4.17                      \\ \hline
\multirow{17}{*}{\textbf{Agri.}} & \multirow{6}{*}{\textbf{Situational}} & DDPG          & 4.85$\pm$0.17                         & \textbf{-2.1$\pm$0.0}               \\
                                 &                                       & RCPO          & 11.49$\pm$1.55                        & -3.2$\pm$0.3                        \\
                                 &                                       & CAL           & 6.71$\pm$3.95                         & -3.79$\pm$3.34                      \\
                                 &                                       & DCRL1         & 3.52$\pm$4.76                         & -13.69$\pm$3.15                     \\
                                 &                                       & DCRL2         & 4.69$\pm$4.93                         & -19.15$\pm$14.76                    \\
                                 &                                       & SCRL          & \textbf{0.01$\pm$0.03}                & -45.37$\pm$9.76                     \\ \cline{2-5} 
                                 & \multirow{6}{*}{\textbf{Priority}}    & DDPG          & 10.37$\pm$0.3                         & \textbf{-2.1$\pm$0.02}              \\
                                 &                                       & RCPO          & 1.26$\pm$0.67                         & -3.22$\pm$0.18                      \\
                                 &                                       & CAL           & 7.64$\pm$2.35                         & -3.74$\pm$3.36                      \\
                                 &                                       & DCRL1         & \textbf{0.0$\pm$0.0}                  & -31.29$\pm$16.37                    \\
                                 &                                       & DCRL2         & 3.88$\pm$4.9                          & -15.22$\pm$19.15                    \\
                                 &                                       & SCRL          & \textbf{0.0$\pm$0.0}                  & -9.46$\pm$1.20                      \\ \cline{2-5} 
                                 & \multirow{5}{*}{\textbf{Joint}}       & DDPG          & 19.54$\pm$0.15                        & \textbf{-2.1$\pm$0.01}              \\
                                 &                                       & RCPO          & 13.22$\pm$0.92                        & -3.04$\pm$0.21                      \\
                                 &                                       & CAL           & 18.61$\pm$1.23                        & -2.39$\pm$0.23                      \\
                                 &                                       & DCRL          & 20.3$\pm$10.66                        & -24$\pm$11.82                       \\
                                 &                                       & SCRL          & \textbf{4.31$\pm$2.38}                & -24.53$\pm$5.07                     \\ \bottomrule
\end{tabular}
\caption{Result on two scenarios, each with three tasks. Mean and Std are collected from 10 independent runs.}
			\label{tab:main-tab}
\end{table}
\subsection{Experiment Result}
\noindent 
The result in Table~\ref{tab:main-tab} shows that DDPG attains high rewards at the cost of severe constraint violations. Both RCPO and CAL rely on a surrogate cost function, so they suffer high violations (with the lone exception of CAL on the Med. joint task). Few DCRL instances satisfy constraints due to the separate implementation. However, DCRL lacks scalability against situational constraints. SCRL consistently offers near-zero cost on priority and situational tasks. For joint task, SCRL also offers low-level costs, demonstrating its advantage in satisfying context-sensitive requirements. Besides, even when a few DCRL instances satisfy constraints, they perform worse on rewards (e.g., -31 vs. -9). In contrast, SCRL successfully guides the agent to higher-reward feasible solutions.

\begin{table}[!t]
\centering  \scriptsize
\begin{tabular}{cllrr}
\toprule
\multicolumn{1}{l}{\textbf{Scenario}} & \multicolumn{1}{l}{\textbf{Task}} & \textbf{Alg.} & \textbf{Equity} & \textbf{Adequacy} \\ \midrule
\multirow{4}{*}{\textbf{Med}}         & \multirow{2}{*}{\textbf{Situational}}  & \textbf{DDPG} & \textbf{7.0}            & 125.1          \\
                                      &                                        & \textbf{SCRL} & 24.8           & \textbf{0.0}             \\ \cline{2-5} 
                                      & \multirow{2}{*}{\textbf{Joint}}        & \textbf{DDPG} & \textbf{7.0}            & 50.0           \\
                                      &                                        & \textbf{SCRL} & 9.6            & \textbf{23.1}           \\ \midrule
\multirow{4}{*}{\textbf{Agri}}        & \multirow{2}{*}{\textbf{Situational}}  & \textbf{DDPG} & \textbf{6.9}            & 22.9           \\
                                      &                                        & \textbf{SCRL} & 15.7           & \textbf{0.0}             \\ \cline{2-5} 
                                      & \multirow{2}{*}{\textbf{Joint}}        & \textbf{DDPG} & 6.9            & 22.9           \\
                                      &                                        & \textbf{SCRL} & \textbf{3.8}            & \textbf{0.0}             \\ \bottomrule
\end{tabular}
\caption{Violation of different constraints. Violation of two constraints (equity and adequacy) in different tasks are shown. In Med. scenario, the adequacy requirement for two tasks are different, thus DDPG has different performances.}
\label{tab:cs}
\end{table}
\subsection{Case Study: Priority and Joint Tasks}\label{sec:BehavAna}
\noindent We investigate SCRL agent's behavior by comparing it with DDPG agent's behavior, as an unconstrained baseline.
\begin{figure}[!t]
 \centering
\subfloat[Agri-Joint]{
  \includegraphics[width=0.35\columnwidth]{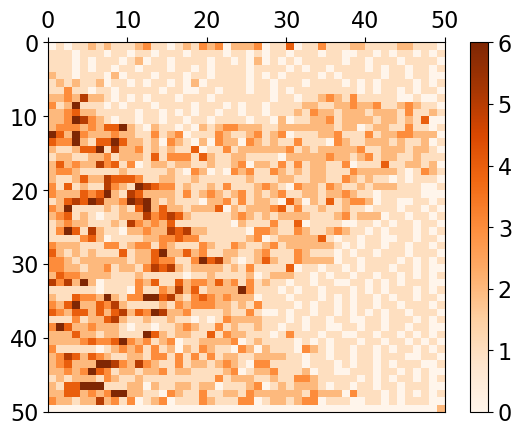}
  \label{fig:agri-equity}
 }
 \subfloat[Agri-Situational]{
  \includegraphics[width=0.35\columnwidth]{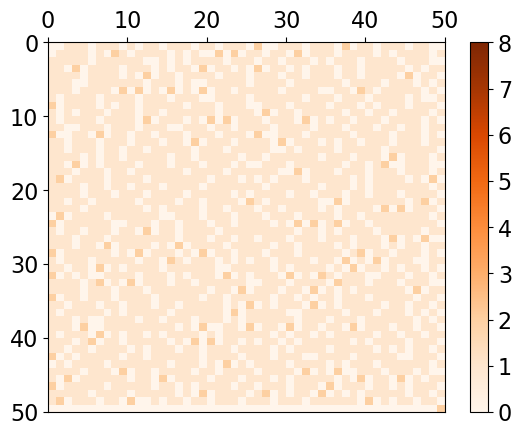}
  \label{fig:agri-safety}
 }
\caption{
SCRL's resource allocation as heatmap. Higher temprerature indicates more resources allocated. In joint task, agent tries to satisfy the equity constraint so allocating resources to different regions in different amounts; In situational task, agent prioritizes adequacy constraint so allocating each region with sufficient resources.
}\label{fig:heatmaps}
\end{figure}

\paragraph{Constraint Violation.}
As shown in Table~\ref{tab:cs}, SCRL effectively prioritizes adequacy constraints in priority tasks, satisfying them at the expense of higher equity violations. In joint tasks, SCRL balances both adequacy and equity constraints, minimizing total violation degrees. This demonstrates SCRL’s adaptability to diverse constraint structures.

\paragraph{Resources Allocation.}
Figure~\ref{fig:heatmaps} visualizes SCRL’s allocation strategies on Agri. scenario, with warmer colors indicating higher allocations. In joint tasks, SCRL balances equity constraints, reflecting region size variations. In priority tasks, it emphasizes adequacy, ensuring critical regions meet minimum demands. These results demonstrate SCRL’s adaptability to dynamically meet task-specific constraints.
A similar analysis for Med. scenario is given in  Appendix.~\ref{sec:apx-exp}.

\paragraph{Resource Efficiency.}
In the Agri. scenario, DDPG minimizes resource usage, completing its trajectory in 800 time steps, achieving higher rewards at the cost of constraint violations. In contrast, SCRL meets adequacy constraints, requiring at least 900 time steps to satisfy minimum demands across regions, leading to lower rewards. A similar trade-off is observed in the Med. scenario. This underscores the inherent trade-off between reward maximization and constraint satisfaction, as CRL algorithms prioritize constraint compliance over unconstrained efficiency.
\subsection{Case Study: More disjunctions}
\begin{table}[]
\centering
\scriptsize
\begin{tabular}{clrr}
\hline
\textbf{Scenario}                                                                  & \textbf{Algorithm} & \multicolumn{1}{l}{\textbf{Cons.Vio}} & \multicolumn{1}{l}{\textbf{Reward}} \\ \hline
\multirow{8}{*}{\textbf{\begin{tabular}[c]{@{}c@{}}Multi\\ Disjunct\end{tabular}}} & \textbf{DDPG}      & 8.6                                   & \textbf{-2.1}                       \\
                                                                                   & \textbf{RCPO}      & 10.8                                  & -4.2                                \\
                                                                                   & \textbf{CAL}       & 10                                    & -2.3                                \\
                                                                                   & \textbf{DCRL1}     & 9.3                                   & \textbf{-2.1}                       \\
                                                                                   & \textbf{DCRL2}     & 9.4                                   & -14.2                               \\
                                                                                   & \textbf{DCRL3}     & \textbf{0}                            & -16.7                               \\
                                                                                   & \textbf{DCRL4}     & \textbf{0}                            & -48.3                               \\
                                                                                   & \textbf{SCRL}      & \textbf{0}                            & -11.8                               \\ \hline
\end{tabular}
\caption{Case study result on multi-disjunction task, Agri. scenario. The constraint involves four disjunctions.}
\label{tab:multi-disj}
\end{table}
Case study in Table.~\ref{tab:multi-disj} shows algorithms performance under multiple disjunctions. The result is consistent with earlier analysis: RCPO and CAL fail due to the surrogate constraint; DCRL occasionally satisfies the constraint with a loss of reward (DCRL 3 and 4); SCRL still shows its advantage in capturing situational constraints.

\subsection{Ablation Study: Probabilistic Mechanism}\label{sec:abl}
\begin{table}[t]
\scriptsize
\centering
\begin{tabular}{clrr}
\toprule
\textbf{Scenario}               & \multicolumn{1}{l}{\textbf{Alg.}} & \multicolumn{1}{c}{\textbf{Cons.Vio}} & \multicolumn{1}{c}{\textbf{Reward}} \\ \hline
\multirow{2}{*}{\textbf{Med.}}  & \textbf{SCRL-min}                 & 0.7$\pm$2.3                              & \textbf{-16.0$\pm$12.0}                \\
                                & \textbf{SCRL}                & \textbf{0.0$\pm$0.0}                     & -18.5$\pm$13.2                         \\ \hline
\multirow{2}{*}{\textbf{Agri.}} & \textbf{SCRL-min}                 & 0.3$\pm$0.8                              & -13.6$\pm$8.2                           \\
                                & \textbf{SCRL}                & \textbf{0.0$\pm$0.0}                     & \textbf{-9.5$\pm$1.2}                  \\ \bottomrule
\end{tabular}
\caption{Ablation study on probabilistic factor. We show the results on situational tasks in two scenarios. {\bf SCRL-min} replaces probabilistic factor with min-operator, as defined in Sec.~\ref{sec:punDis}.}
\label{tab:ablation}
\end{table}
We compare the proposed probabilistic mechanism (SCRL) with a variant using the $\min$ operator (SCRL-min). As shown in Table~\ref{tab:ablation}, SCRL-min performs similarly in reward but incurs slightly higher constraint violations. This aligns with our earlier claim that the $\min$ operator may mislead the agent to satisfy an infeasible constraint with a lower violation.

\section{Conclusion and Limitations}\label{sec:Con}
This paper tackles sequential resource allocation (SRA) under situational constraints, formulating the problem as an MDP with density functions to quantify regional resource allocation. We propose the SCRL algorithm, which trains agents under context-sensitive constraints. 

{\bf Limitations.} This study focuses on a single resource allocator and single-resource type. Extending the framework to multi-agent systems and multi-resource scenarios offers promising directions for future work.
\bibliographystyle{named}
\bibliography{ijcai25}
\clearpage
\section{Appendix}
\subsection{Further Algorithm Explanations}\label{Sec:App-Alg}

\noindent{\bf Proposition and Proof.} Here we put the proof for the Proposition.~\ref{rem:DCRL-behav}.
\begin{proposition*}
Consider an SRA problem with an atomic constraint $\varphi$ of the form $\vec{a}\cdot \vec{\rho}^\pi\leq b$. With a sufficiently small learning rate $\beta$, the algorithm framework incorporating the punitive term $\sigma(\varphi,s)$ converges. 
\end{proposition*}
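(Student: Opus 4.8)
The plan is to show that, restricted to a single atomic constraint, the scheme of Algorithm~\ref{alg:Scheme} with punitive term $\sigma(\varphi,s)=w(\varphi,s)\,\kappa(\varphi)$ is \emph{exactly} a Lagrangian primal--dual (reward-shaping) scheme of the RCPO type~\cite{tessler2018reward}, so that its convergence follows from the established two-timescale stochastic-approximation analysis. \textbf{Step 1 (reducing the penalty to a scalar Lagrangian term).} Fix a policy $\pi$ and let $C:=\sum_{i=1}^m|a_i|>0$, a constant depending only on $\varphi$. By linearity of expectation and the definition of the expected density,
\[
\mathbb{E}_{\tau\sim\pi}\!\left[\sum_{t\ge 0}\gamma^t\,\sigma(\varphi,s_t)\right]
=\kappa(\varphi)\sum_{t\ge 0}\gamma^t\,\mathbb{E}_{\tau\sim\pi}[w(\varphi,s_t)]
=\frac{\kappa(\varphi)}{C}\sum_{i=1}^m a_i\,\rho^\pi(i)
=\frac{\kappa(\varphi)}{C}\,\vec a\cdot\vec{\rho^\pi}.
\]
Hence the expected punished return maximized in line~\ref{line:update-pi} equals $\sum_{s}\eta(s)V_\pi(s)-\lambda\,(\vec a\cdot\vec{\rho^\pi})$ with $\lambda:=\kappa(\varphi)/C\ge 0$, which differs from the Lagrangian $\sum_{s}\eta(s)V_\pi(s)-\lambda(\vec a\cdot\vec{\rho^\pi}-b)$ of Problem~\ref{prob:prob1} only by the $\pi$-independent constant $\lambda b$. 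Therefore the policy-update step is exactly the primal (inner) update of a Lagrangian scheme with multiplier $\lambda$.

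\textbf{Step 2 (the penalty-factor update is projected dual ascent).} Writing $Vio^\pi(\varphi)=\vec a\cdot\vec{\rho^\pi}-b$, the update $\kappa'(\varphi)=\max(0,\kappa(\varphi)+\beta\,Vio^\pi(\varphi))$ reads, in terms of $\lambda$, as $\lambda'=\max\!\bigl(0,\lambda+(\beta/C)(\vec a\cdot\vec{\rho^\pi}-b)\bigr)$, i.e.\ projected stochastic gradient ascent on the dual variable with effective step size $\beta/C$, the (sampled) gradient being the constraint violation estimated from $D_\pi$. \textbf{Step 3 (two-timescale convergence).} With $\beta$ small enough relative to the policy-optimization step sizes, the iterates form a two-timescale stochastic approximation: on the fast timescale the policy converges, for each frozen $\lambda$, to a stationary point $\pi^\star(\lambda)$ of the shaped objective; on the slow timescale $\lambda$ (equivalently $\kappa$) tracks the projected ODE $\dot\lambda=\Gamma\bigl(\vec a\cdot\vec{\rho^{\pi^\star(\lambda)}}-b\bigr)$, with $\Gamma$ the projection onto $\{\lambda\ge 0\}$. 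The standard argument (Borkar's multi-timescale theory, as invoked for RCPO in~\cite{tessler2018reward}), under the usual hypotheses — bounded reward, smooth policy parameterization, convergence of the policy subroutine to a stationary point, and a strictly feasible policy (Slater-type condition) ensuring $\{\kappa_k\}$ stays bounded — gives almost-sure convergence of the coupled iterates to a limit $(\pi^\star,\lambda^\star)$ that is a fixed point of this ODE, hence a KKT point of Problem~\ref{prob:prob1}. At such a point either $\lambda^\star=0$ with $\vec a\cdot\vec{\rho^{\pi^\star}}\le b$, or $\lambda^\star>0$ with $\vec a\cdot\vec{\rho^{\pi^\star}}=b$; in both cases $\pi^\star$ is feasible, which is the claimed conclusion.

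\textbf{Main obstacle.} The delicate part is Step 3: making the timescale separation rigorous, invoking the Slater condition to bound the multiplier sequence, and arguing that the inner policy-optimization subroutine returns a (near-)stationary policy $\pi^\star(\lambda)$ for the current multiplier so that the slow-timescale ODE is well posed; the treatment of the $\max(0,\cdot)$ clipping is routine via projected stochastic approximation but should be stated explicitly. By contrast, the reward-shaping reduction of Steps 1--2 is exact and clean precisely because the weighting factor $w(\varphi,\cdot)$, summed with $\gamma$-discounting, integrates to the normalized constraint functional $\tfrac1C\,\vec a\cdot\vec{\rho^\pi}$; this is exactly what lets us import the RCPO convergence analysis rather than redevelop it.
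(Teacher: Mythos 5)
Your proposal is correct and follows essentially the same route as the paper's proof: both identify the weighting factor $w(\varphi,s)$ as a normalized cost function $c_\varphi(s)/\sum_i|a_i|$, rescale the penalty factor into a Lagrange multiplier $\lambda=\kappa(\varphi)/\sum_i|a_i|$ with effective step size $\beta/\sum_i|a_i|$, and then import the RCPO-style two-timescale convergence guarantee. Your write-up is somewhat more explicit about the $\lambda b$ offset, the projected dual-ascent interpretation, and the KKT/complementary-slackness argument for feasibility of the limit, but the underlying reduction is identical to the paper's.
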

\begin{proof}

Recall that a canonical CRL problem incorporates a cost function $c(s) \in \real$ with a threshold $\alpha$, and is formulated as:
\begin{align}
    & \arg\max_{\pi} \sum_{s \in \mathcal{S}} \eta(s)V_\pi(s)  \nonumber \\
    & \text{ s.t. } \mathbb{E}[\sum_t \gamma^t c(s_t) \mid \eta, \pi] \leq \alpha.
\end{align}

For simplicity, let $J^r_\pi$ and $J^c_\pi$ represent the expected cumulative reward and cost under policy $\pi$, respectively. A typical Lagrangian-based CRL method uses a Lagrange function $\Lmc(\sigma, \pi) = J^r_\pi - \sigma \times J^c_\pi$, where $\pi$ is updated by maximizing $\Lmc(\sigma, \pi)$, and the multiplier $\sigma$ is adjusted incrementally with learning rate $\beta$ based on the violation degree $J^c_\pi - \alpha$. 
An CRL algorithm usually optimize a policy with a punished reward $r'(s,a)=r(s,a)-\sigma \times c(s)$, reflecting the objective $J^r_\pi - \sigma \times J^c_\pi$.
Under mild assumptions, the algorithm converges almost surely to a feasible solution when the constraint is satisfied~\cite{tessler2018reward}.
We will prove that our algorithm with punitive term function $\sigma(\varphi,s)$ converges to a feasible local optimum of SRA problem with atomic constraint $\varphi$ based on the convergence above.

Consider an atomic constraint $\varphi \colon= \sum_{m \in M} a_m \times \rho^\pi(m) \leq b$, where $c_\varphi(s) = \sum_{m \in M} a_m \cdot \mathbb{1}(m \in L(s))$ serves as the cost function with threshold $b$. Then:
\begin{align}
    J^{c_\varphi}_\pi &= \sum_t \gamma^t \sum_{s \in \Smc} \Pr(s_t = s) c_\varphi(s) \nonumber \\
    &= \sum_t \gamma^t \sum_{m \in M} a_m \Pr(m \in L(s_t)). \nonumber
\end{align}
This shows $J^{c_\varphi}_\pi - b = Vio^\pi(\varphi)$, indicating that $J^{c_\varphi}_\pi \leq b$ if and only if $\pi$ satisfies $\varphi$. Therefore, the canonical CRL problem with $J^{c_\varphi}_\pi \leq b$ shares the same feasible policy set and optimal solutions as the SRA problem with constraint $\varphi$.

In our algorithm framework, the punitive term function $\sigma(\varphi, s)$ modifies the reward as $r' = r - \sigma(\varphi, s) = r - \kappa(\varphi) \times w(\varphi, s)$. By replacing $w(\varphi, s)$ with $c_\varphi(s) / \sum_{m \in M} |a_m|$, the reward becomes:
\[
r' = r - \kappa(\varphi) \times c_\varphi(s) / \sum_{m \in M} |a_m|.
\]
Letting $d = \sum_{m \in M} |a_m|$ and defining $\lambda(\varphi) := \kappa(\varphi) / d$, the reward is rewritten as $r' = r - \lambda(\varphi) \times c_\varphi(s)$. With $\kappa(\varphi)$ updated as $\kappa^o(\varphi) + \beta \times Vio^\pi(\varphi)$, this implies $\lambda(\varphi) = \lambda^o(\varphi) + (\beta / d) \times Vio^\pi(\varphi)$, where $\kappa^o(\varphi)$ and $\lambda^o(\varphi)$ represents the $\kappa(\varphi)$ and $\lambda(\varphi)$ in the previous iteration.

Thus, $\lambda(\varphi)$ can be interpreted as a Lagrangian multiplier with learning rate $\beta / d$, and the policy is updated to maximize the cumulative punished reward $\mathbb{E}[\sum_t \gamma^t r'(s_t) \mid \eta, P]$. Under standard assumptions, the algorithm converges to a feasible solution of the canonical CRL problem with constraint $J^{c_\varphi}_\pi \leq b$.

Finally, due to the equivalence of feasible policy sets between the SRA problem with atomic constraints and the reformulated CRL problem with $c_\varphi$, our algorithm with the punitive term $\sigma(\varphi, s)$ converges almost surely to a feasible local optimum. If applied with Actor-Critic-based RL algorithms, e.g., RCPO~\cite{tessler2018reward}, convergence to a feasible local optimum is also guaranteed under mild assumptions.

\end{proof}

\noindent{\bf Punitive Term via min Operator.}
Here we show how a disjunction can be represented by a min-operator by visualization in Figure.~\ref{fig:algtree}. We take the values of $\sigma(\psi)$ as example, and the idea simply follows canonical arithmetical representation.
\usetikzlibrary{shapes,arrows,positioning}

\begin{figure}
    \centering
\resizebox{0.6\columnwidth}{!}{%
\begin{tikzpicture}[
  level/.style={sibling distance=35mm/#1},
  edge from parent/.style={<-,draw},
  every node/.style={circle,draw,minimum size=10mm},
  level 1/.style={level distance=20mm},
  level 2/.style={level distance=20mm},
  level 3/.style={level distance=25mm}
]

\node {$+$}
  child {
    node {$\min$}
    child {
      node {5}
    }
    child {
      node {1}
    }
  }
  child {
    node {$\min$}
    child {
      node {3}
    }
    child {
      node {4}
    }
  }
  child {
    node {$\min$}
    child {
      node {0}
    }
    child {
      node {2}
    }
  };

\node[align=center,draw=none] at(0,0.8) {\large $\sigma(\psi)=4$};
\node[align=center,draw=none] at(-3.5,-1.2) {\large $\sigma(\psi_1)=1$};
\node[align=center,draw=none] at(3.5,-1.2) {\large $\sigma(\psi_3)=0$};
\node[align=center,draw=none] at(-1.5,-2) {\large $\sigma(\psi_2)=3$};
\end{tikzpicture}
}
\caption{Example of representing $\psi$. The number is the value of $\sigma(\psi_{i,j})$, and boolean operators are represented by arithmetical operators.}
    \label{fig:algtree}
    
\end{figure}

\noindent{\bf  Punitive Term for Disjunction.} The following algortihm explains how the probabilistic mechanism works in defining punitive term for a disjunctive constraint.

\begin{algorithm}
    \caption{Calculating punitive term at state $s$ for $\psi := \lor_{j \in J} \varphi_j$}\label{alg:sig_hat}
\begin{algorithmic}[1]
    \STATE {\bf Input:} $\psi := \lor_{j \in [J]} \varphi_j, s, \kappa$
    \IF{$\kappa(\varphi_j)>0, \forall j$ }
        \STATE $u=\sum_{j \in J}\kappa(\varphi_j)^{-1}$
        \FOR{$j \in J$}
            \STATE $p_j \leftarrow \kappa(\varphi_j)^{-1} / u,$
        \ENDFOR
        \STATE return $\kappa(\varphi_j)\times w(\varphi_j,s)$ with probability $p_j$
    \ELSE
        \STATE return 0
    \ENDIF
\end{algorithmic}
\end{algorithm}

\noindent{\bf Potential Sub-optimality of $\min$ Operator.} The Figure.~\ref{fig:weiDft} below explains why $\min$ operator may lead to sub-optimal solution. In Figure.~\ref{fig:weiDft}, infeasible constraints are represented by sets with no intersection with \( \Pi \), and the distance from \( \pi' \) to each set corresponds to the punitive term. 
\begin{figure}
    \centering
    \includegraphics[width=0.75\linewidth]{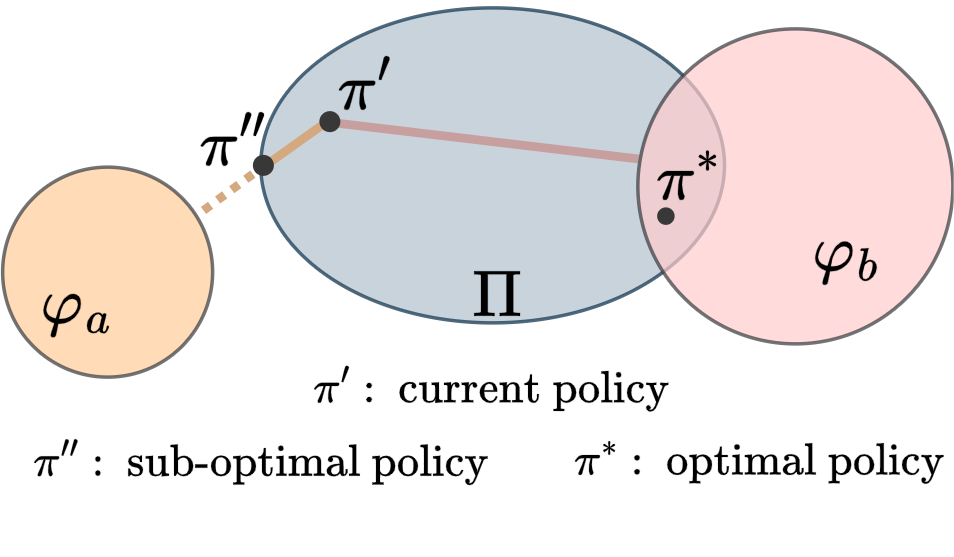}
    \caption{Illustration of sub-optimality in taking $\min$ operator. The grey ellipse represents entire policy space $\Pi$, with the orange (left) and pink (right) regions denoting the feasible sets of $\varphi_a$ and $\varphi_b$, respectively. Since $\varphi_a$ is infeasible (no intersection with $\Pi$), the algorithm misguides the policy $\pi'$ toward satisfying $\varphi_a$, resulting in the sub-optimal policy $\pi''$, despite $\varphi_b$ being feasible.}
    \label{fig:weiDft}
\end{figure}

\begin{table*}[!htbp]
      \small
                \centering 
\begin{tabular}{ll}
\hline
\multicolumn{1}{c}{\textbf{Tasks}} & \multicolumn{1}{c}{\textbf{Task Specification}}                                                                                                                                                        \\ \hline
\textbf{Agri.-Situational}         & $\neg (\rho^\pi(2) \leq 300) \to \rho^\pi(3) \geq 800$                                                                                                                                             \\
\textbf{Agri.-Priority}            & $\neg\biggl(\bigl(\rho^\pi(1)\geq 300\bigr) \land \bigl(\rho^\pi(3)\geq 300\bigr) \land \bigl(\rho^\pi(4)\geq 300\bigr)\biggr) \rightarrow \bigl(\rho^\pi(1)+\rho^\pi(4)= \rho^\pi(3)\bigr) $                      \\
\textbf{Agri.-Joint}               & $\bigl(\rho^\pi(1)+\rho^\pi(4)= \rho^\pi(3)\bigr) \land \bigl(\rho^\pi(1)\geq 300\bigr) \land \bigl(\rho^\pi(3)\geq 300\bigr) \land \bigl(\rho^\pi(4)\geq 300\bigr) $                                              \\ \hline
\textbf{Med.-Situational}          & $\neg (\rho^\pi(1)+\rho^\pi(2) \leq 800) \to (\rho^\pi(3)+\rho^\pi(4)+\rho^\pi(5) \geq 1200)$                                                                                                \\
\textbf{Med.-Priority}             & $\neg\biggl(\bigl(\rho^\pi(2)\geq 1000\bigr) \land \bigl(\rho^\pi(4)\geq 200\bigr) \land \bigl(\rho^\pi(5)\geq 800\bigr)\biggr) \rightarrow \bigl(\rho^\pi(1)+\rho^\pi(2)= \rho^\pi(3)+\rho^\pi(4)+\rho^\pi(5)\bigr) $ \\
\textbf{Med.-Joint}                & $\bigl(\rho^\pi(1)+\rho^\pi(2)= \rho^\pi(3)+\rho^\pi(4)+\rho^\pi(5)\bigr) \land \bigl(\rho^\pi(2)\geq 300\bigr) \land \bigl(\rho^\pi(3)\geq 200\bigr) \land \bigl(\rho^\pi(4)\geq 200\bigr) $                          \\ \hline
\end{tabular}\caption{Constraints in each task. Number indicates the units of resources allocation.}
                \label{tab:specs}
        \end{table*}
\subsection{Experiment Settings}~\label{sec:apx-exp}
In our experiment, each scenario is divided into a 50 $\times$ 50 grids, with five sub-regions over it.
The agent navigates through the entire map, i.e. district in medical resource allocation and the farmland in the agricultural resource allocation.
A state indicates the coordinate of the agent and the current local dynamics, e.g. current moving velocity.
The agent's action controls the local dynamic, thus influences the transition and resulting density function, referring to the resources allocation amount in our problem formulation.
Under our MDP formulation, the reward specifies the resource efficiency.
Therefore, the reward is negatively related with the speed, as faster the agent moves, less resources amount is allocated.
The reward is given negatively related with the speed, specifically, at each step $t$, $r_t = -0.1 / (1 + v_t)$, where $v_t$ is the velocity.
This reward setting aligns with the agricultural pesticide allocation from~\cite{qin2021density}.

\subsection{Task Specifications}\label{sec:det}

The constraints are explained, where one unit of resources allocated to a region $S_i$ refers to 1 in the density value of a region $S_i$. 
In the Medical Resource Allocation scenario, we have:
\begin{itemize}
    \item {\bf Situational Task.} If low-demand regions (\( S_1, S_2 \)) receive more than 800 units, ensure high-demand regions (\( S_3, S_4, S_5 \)) receive at least 1200 units.
    \item {\bf Priority Task.} If \( S_2, S_4, S_5 \) cannot receive sufficient resources (1000, 200, and 800 units, respectively), enforce equal allocation between low-demand (\( S_1, S_2 \)) and high-demand (\( S_3, S_4, S_5 \)) regions.
    \item {\bf Joint Task.} Ensure \( S_2, S_3, S_4 \) receive sufficient resources (300, 200, and 200 units, respectively), while maintaining equal allocation between low-demand (\( S_1, S_2 \)) and high-demand (\( S_3, S_4, S_5 \)) regions.
\end{itemize}
In the Agricultural Resource Allocation scenario, we have:
\begin{itemize}
    \item {\bf Situational Task.} If \( S_2 \) receives more than 300 units of pesticide, ensure \( S_3 \) receives at least 800 units.
    \item {\bf Priority Task.} If \( S_1, S_3, S_4 \) cannot receive sufficient pesticides (300 units each), ensure equal allocation between two crops: one in \( S_4 \), and another across \( S_1 \) and \( S_3 \).
    \item {\bf Joint Task.} Ensure \( S_1, S_3, S_4 \) receive 300 units each, while maintaining equal allocation between two crops: one in \( S_4 \), and another across \( S_1 \) and \( S_3 \).
\end{itemize}

The table below shows the situational constraints we use in our experiment tasks. Numbers are proportional to the units of resources allocated to each region, thus can be empirically evaluated by the density function.

\subsection{Further Case Study}~\label{sec:apx-cs}
Here we present a similar heat map for the case study on Medical resources allocation scenario in the main body. The result is consistent with the case study in the main body: When the agent attempts at the equity constraint, the resources allocation among regions varies, and when agent prioritizes the adequacy constraint, the temperature differences on the heat map get smoother.
\begin{figure}[!tbhp]
 \centering
 \subfloat[Med-Joint]{
  \includegraphics[width=0.4\columnwidth]{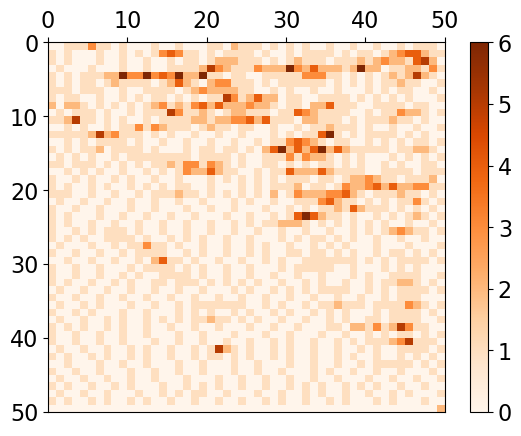}
  \label{fig:med-equity}
 }
 \subfloat[Med-Situational]{
  \includegraphics[width=0.4\columnwidth]{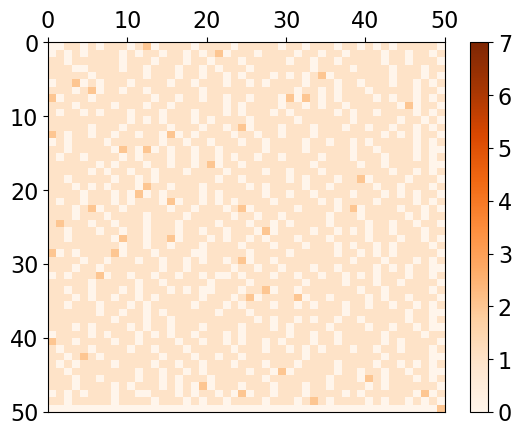}
  \label{fig:med-safety}
 }
\caption{
SCRL's resources allocation as heatmap. In the heatmaps, higher value indicates more resources allocated. In joint task, agent tries to satisfy the equity constraint so allocate resources to different regions with different amount; In situational task, agent priorities safety constraint so allocate each region with sufficient resources.
}\label{fig:heatmaps-ful}
\end{figure}

\end{document}